\documentclass[lettersize,journal]{IEEEtran}
\usepackage{amsmath,amsfonts}
\usepackage{algorithmic}
\usepackage{algorithm}
\usepackage{array}
\usepackage[caption=false,font=normalsize,labelfont=sf,textfont=sf]{subfig}
\usepackage{textcomp}
\usepackage{stfloats}
\usepackage{amsthm}
\usepackage{url}
\usepackage{verbatim}
\usepackage{mathtools}
\usepackage{amsmath}
\usepackage{amssymb}
\usepackage{graphicx}
\usepackage{cite}
\usepackage{multirow}
\usepackage{booktabs}
\usepackage{subcaption}
\newtheorem{theorem}{Theorem}[section]
\newtheorem{corollary}{Corollary}[theorem]
\newtheorem{lemma}[theorem]{Lemma}
\newtheorem{proposition}{Proposition}

\usepackage{xcolor}

\newcommand{\nf}[1]{\textcolor{black}{#1}}

\usepackage[x11names, svgnames, dvipsnames]{xcolor}
\usepackage{color}
\usepackage{colortbl}

\begin{document}

\title{A Reachability-based Safety Certificate \\ for Dynamical System Motion Policies}

\author{Aditya Vats$^{*}$, Tianyi Xia$^{*}$ and Nadia Figueroa%
        \thanks{$^{*}$These authors contributed equally to this work.}%
        \thanks{The authors are with the University of Pennsylvania,
        Philadelphia, PA, USA.}}


\maketitle

\begin{abstract}

\nf{Dynamical Systems (DS) are reactive motion policies representing vector fields trained with theoretical guarantees of stability and convergence. To ensure safety during deployment in unknown environments they must be locally reshaped, either through modulation or geometric control barrier function strategies. However, depending on the geometry of the obstacles and the complexity of the DS, these local strategies can lead the system to unavoidable collisions or spurious attractors.} In this work, we certify safety with a value function drawn from the notion of backward reachability tube, which measures the worst-case safety along a rollout trajectory \nf{of the nominal DS.}
Usually, such a value function is intractable for a controlled system due to curse of dimensionality. We show that in the \nf{DS-based} learning-from-demonstration setting, the absence of a control input collapses the reachability problem to a deterministic rollout, and the presence of certain stability conditions truncates the infinite horizon to a finite one, resulting in a well-defined value function. We further show that the value function we devised is the maximal forward-invariant subset of the obstacle-free region for the nominal DS flow. The application of this certificate function is validated across five DS constructions - analytical, Neural ODE, diffeomorphic latent space, LPV-DS, SE(3) and validate it on a Franka manipulator. Modulation and geometric CBFs also suffer from saddle point in cases of head-on approach towards an unsafe zone. We show that CBF-on-V avoids this pitfall entirely. 

\end{abstract}

\begin{IEEEkeywords}
Dynamical Systems, Backward Reachability, Control Barrier Functions, Safety.
\end{IEEEkeywords}

\section{Introduction}

\IEEEPARstart{M}{any} applications in robotics require learning a task from a few demonstrations. This problem, referred to as imitation learning, is a cornerstone in robotics. It is especially important for human-robot interaction where a trajectory may be demonstrated by a human that is to be mimicked while extending beyond the demonstration domain. Dynamical systems (DS) are first-order autonomous systems that define a vector field on a domain. A trajectory integrated from these velocity vectors is meant to imitate the demonstrations. An attractive property of a DS is its robustness to perturbation where it can adapt its trajectory from a disturbed position. With guarantees of stability imbibed in its construction, the motion plan from such a vector field converges to an equilibrium position, usually the target point at the end of a task.  

A DS, however, encodes only the demonstrated task and has no native handling of obstacles encountered at deployment. Existing approaches augment the DS with a safety mechanism, and fall into two groups. The first deflects the vector field geometrically via a multiplicative modulation matrix that reshapes the flow around an obstacle as a function of proximity to it \cite{khansari2012modulation, Huber2019-ur, onmanifold}. The second enforces safety as an additive modification to the DS where a control barrier function (CBF) to find an appropriate modification\cite{Schonger_2024, Binny_2026, 10611584}. Both groups of work are \textbf{local}, wherein the correction applied at a state depends only on the obstacle geometry at that state mainly the current margin and the current boundary normal. Thus, in complex obstacle geometries and/or with highly nonlinear DS, these local methods may induce saddle points, spurious attractors or even limit cycles, impeding the liveness \cite{10665911} of the execution.

\nf{In this work, we instead propose a certificate that is} a function of the closed-loop policy that elevates safety from a point-wise geometric property to a path property. A filter derived from such a certificate intervenes only when the uncorrected trajectory enters the obstacle, rather than whenever it is close to the obstacle. \nf{We adopt} a Hamilton-Jacobi (HJ) reachability value function to encode this global approach to the safety problem. It encodes the worst-case safety violation over the entire trajectory. It's super-level set is the maximal control invariant set to maintain forward invariance \cite{choi2021robustcontrolbarriervaluefunctions}. 

For a controlled system, computing this value function requires solving a Hamilton Jacobi PDE over the state space. The setting of this problem suffers from the curse of dimensionality where the problem becomes intractable as the dimension scales which is one of the reasons why it is not prominent as a certificate. Learning-based methods such as DeepReach \cite{bansal2020deepreachdeeplearningapproach} instead learn a neural network using a loss function that is derived the Hamilton Jacobi PDE. The learning landscape of this method is complex, involving multi-stage training. Other methods that use the reachability value function include disturbances and control inputs in their formulations. They do not exploit the structure of an autonomous motion plan. 

\nf{Interestingly, the DS-based motion planning and learning-from-demonstration (LfD) paradigm \cite{DSbook} affords} two simplifications that make a BRT (backward-reachable-tube)-like value function far more tractable to compute. First, the DS is \emph{autonomous}, i.e., no control input collapses the value function from an optimal control problem to a deterministic minimum of safety function along a rollout. Second, an observable finite convergence time lets us equate finite-time forward invariance with forward invariance for all time. With these simplifications, the reachability-based value function admits closed-form expressions for analytical DS and a learning-based construction for highly nonlinear DS. We use this value function as a safety constraint that injects a virtual control input into the DS, filtering the nominal policy without redesigning it. The construction is agnostic to how the DS is represented: analytical, Neural ODE, diffeomorphic latent-space, stochastic, and SE(3) formulations are handled by the same filter, with only the value-function computation changing from one to the next. 

To the best of our knowledge, this is the first work to apply reachability-based safety certification to learned DS motion policies. Specifically, we make
the following contributions:
\begin{itemize}
\item We show that for an autonomous DS the backward-reachable-tube value function reduces to
a deterministic minimum of the safety function along the rollout, eliminating the
optimal-control term that renders reachability intractable for general systems.

\item We derive, from stability Lyapunov certificate with a known convergence rate, a closed-form
finite-horizon truncation of the value function, and prove that forward invariance over this
finite horizon implies invariance for all time.

\item We prove that the resulting safe set is the maximal forward-invariant subset of the
obstacle-free region under the nominal flow, and is therefore the least conservative such
certificate.

\item We show that this certificate removes a failure mode shared by modulation and
geometric-CBF–based avoidance: the head-on stagnation point that becomes a saddle equilibrium
under reference-based modulation and a spurious attractor under a geometric CBF.
\item We instantiate the certificate across five DS formulations and validate it on a 7-DoF
Franka manipulator.
\end{itemize}

\section{Background}
\subsection{Dynamical Systems (DS)}
A dynamical system (DS) \cite{DSbook} is a first-order, autonomous system used as a motion plan for a robotic system
\begin{align}
    \dot{x} = f(x) 
\end{align} where $x\in\mathbb{R}^n$ is the state of the system and $f(\cdot):\mathbb{R}^n\rightarrow\mathbb{R}^n$ defines a vector field over the entire state space. Autonomous here means
$f$ depends only on the state, with no explicit time or external input; integrating the field from any initial state yields a trajectory that imitates the demonstrations. 

\subsection{Lyapunov Conditions for Stability}
Stability of a DS is established through the existence of a
Lyapunov function $V_L(x):\mathbb{R}^n\rightarrow\mathbb{R}$ \cite{lasa, KhansariZadeh2014LearningCL}. Without loss of generality, we shift coordinates
so that the equilibrium is at the origin, $x^* = 0$. A Lyapunov function
satisfies:
\begin{enumerate}
    \item $V_L(0) = 0$
    \item $V_L(x) > 0$ for all $x \neq 0$
    \item $\dot{V}_L(x) \le 0$ for all $x \neq 0$
\end{enumerate}
where $\dot{V}_L(x) = \nabla V_L(x) \cdot f(x)$ is the rate of change of $V_L$
along trajectories. Such a function certifies \emph{Lyapunov stability}:
a trajectory starting within a neighborhood of the origin remains within a
bounded neighborhood of it for all time. Strengthening condition (3) to a
strict inequality, $\dot{V}_L(x) < 0$ for all $x \neq 0$, certifies
\emph{asymptotic stability}, under which the trajectory additionally
converges to the equilibrium, $x(t) \to 0$ as $t \to \infty$, though at no
guaranteed rate. The strongest of the three is \emph{exponential stability}, certified by
\begin{align}
    \dot{V}_L(x) \le -\alpha V_L(x)
    \quad \text{and} \quad
    c_1 \|x\|^2 \le V_L(x) \le c_2 \|x\|^2,
    \label{eqn_stability}
\end{align}
where $\alpha > 0$ is the convergence rate and $c_1, c_2 > 0$ bound $V_L$
between two paraboloids. Exponential stability implies convergence of the state to the equilibrium at an exponential rate
\begin{align}
    \|x(t)\| \le \sqrt{c_2/c_1}\, \|x_0\|\, e^{-\alpha t/2}.
\end{align}
This exponential envelope implies that for any target tolerance $\eta$, the
trajectory satisfies $\|x(t)\| \le \eta$ within a finite time. 
\subsection{Forward Invariance and Control Barrier Functions}

For motion planning, safety is the notion where the trajectory never
enters an unsafe zone. We formalize this through forward invariance. A set
$S$ is \emph{forward invariant} under nominal dynamics $\dot{x} = f(x)$ if every trajectory
starting in $S$ remains in $S$ for all time,
\begin{align}
    x_0 \in S \implies x(t) \in S \quad \text{for all } t \ge 0.
\end{align}
If the failure region lies outside $S$, forward invariance of $S$ is
the statement that the system stays safe.

Let $S$ be the $0$-superlevel set of a continuously differentiable function
$h : \mathbb{R}^{n} \to \mathbb{R}$, $S = \{x : h(x) \ge 0\}$, with
$\nabla h(x) \neq 0$ on $\partial S$. A \emph{barrier function}
\cite{Ames_2017, cbftut} certifies invariance of $S$ through a pointwise condition on the rate of
change of $h$ along the flow,
\begin{align}
    \dot{h}(x) = \nabla h(x) \cdot f(x) \ge -\alpha\, h(x),
    \qquad \alpha > 0.
    \label{eq:cbf}
\end{align}
Enforcing \eqref{eq:cbf} from $h(x_0) \ge 0$ ensures the trajectory never
leaves $S$. i.e., condition \eqref{eq:cbf} certifies forward invariance of $S$.

\subsection{Backward Reachability and the Value Function}
Herein, we relate our value function\footnote{\nf{Note that this value function $V(x)$ is not the same as the Lyapunov function $V_L(x)$ used to ensure stability of the nominal DS.}} $V(x):\mathbb{R}^n\rightarrow\mathbb{R}$ to the backward-reachability concept. Consider a controlled-system $\dot{x} = f(x, u)$ and a failure set described by a safety function $h: \mathbb{R}^n\to \mathbb{R}$, whose zero-sublevel set is the failure region, $\mathcal{F} = \{x : h(x) < 0\}$. The \emph{backward-reachability tube} (BRT) \cite{bansal2017hamiltonjacobireachabilitybriefoverview, brt1} for this failure region is the set from where the system would inevitably go into $\mathcal{F}$ despite the best admissible control efforts. 
\begin{align}
    V(x, t) = \sup_{u(\cdot)} \; \min_{\tau \in [t, T]} h\big(x(\tau)\big),
    \label{eq:brt-value}
\end{align}
Here, $x(\tau)$ is the trajectory under control $u$ from $x$. The reachability value function $V(x,t) \ge 0$ so defined certifies that a control law exists that keeps the entire trajectory in the safe region, and the safe set is the superlevel set $\{x : V(x,t) \ge 0\}$. 

This value function is classically optained through Hamilton Jacobi Reachability Analysis where this function is obtained as a viscosity solution to the Hamilton Jacobi PDE \cite{choi2021robustcontrolbarriervaluefunctions}. This route suffers from curse of dimensionality where the solution becomes intractable as the problem dimensionality increases, limiting the application of such a reachability function in robotics problems. A central property is that its superlevel set is the \emph{maximal} control-invariant subset of the safe
region; i.e., every control-invariant set is contained in it, so $V$ is the least
conservative safety certificate available to the controlled system
\cite{choi2021robustcontrolbarriervaluefunctions}.

\section{Methodology}
We treat the controlled backward-reachability value function of \eqref{eq:brt-value} to dynamical systems (DS) motion policies. We make the following assumptions on the DS under consideration:

\begin{enumerate}
    \item \textbf{Attractor.} The DS $\dot{x} = f(x)$ is autonomous and has
    a single equilibrium $x^*$ toward which trajectories converge, \nf{asymptotically or exponentially.}
    \item \textbf{Observable finite-time convergence.} There exists a finite, computable time $T_{fin}(x_0)$ after which the trajectory enters $\eta$-ball about $x^*$. $T_{fin}(x_0)$ can generally be obtained from exponential stability guarantee imbibed in the DS or through construction of the DS. 
    \item \textbf{Attractor outside failure state.} The failure region is away from the equilibrium; i.e., $h(x) > 0 ~ \forall x$ within distance $\eta$ of
    $x^*$.
\end{enumerate}

These assumptions generally hold for a broad class of stable DS learned from
demonstration in literature.

\subsection{Autonomy Collapse of the Value Function}

The value function from \eqref{eq:brt-value} is an optimal control problem with a supremum over the admissible control input. The safety of the state $x$ depends on whether an admissible control input is present that can keep the trajectory outside the failure state. This is the source of the HJ PDE and one of the reasons of computation intractability.  

For an autonomous DS there is no control input, from any state $x$ the
trajectory $\phi_\tau(x)$ is obtained by integrating $\dot{x} = f(x)$. Therefore, the optimization problem reduces to the minimization of $h$ over time which is, depending on the setting, either a closed form equation or easily determined:
\begin{align}
    V(x, t) = \min_{\tau \in [t, T]} h\big(\phi_\tau(x)\big).
    \label{eq:collapsed-value}
\end{align}

Setting $t = 0$ and extending the horizon to infinity yields the stationary
value function
\begin{align}
    V(x) = \min_{\tau \in [0, \infty)} h\big(\phi_\tau(x)\big),
    \label{eq:stationary-value}
\end{align}
which yields the worst safety value over the entire future of
the nominal trajectory from $x$.

Two consequences follow directly from \eqref{eq:stationary-value}. First, $V(x) \ge 0$ if and only if $h(\phi_\tau(x)) \ge 0$ for all
$\tau \ge 0$, that is, the safe set $\{V \ge 0\}$ is exactly the set of
states whose nominal trajectory never enters the failure region. Safety of
the nominal flow over $\{V \ge 0\}$ is thus immediate from the definition,
inducing forward invariance property by definition. Second, because $V$ is built from
the rollout of the deterministic flow, it can be evaluated by integration
alone which can be done in closed form where $\phi_\tau$ is available analytically, and by
a learned approximation otherwise without ever
solving a partial differential equation.

\subsection{Finite-Horizon Truncation}
The stationary value function $V(x)$ requires a rollout over an infinite time horizon, which is not tractable. Assumption~2 guarantees a finite, computable time $T_{\mathrm{fin}}(x_0)$ after which the trajectory has entered and thereafter remains within an $\eta$-ball about $x^*$. This is the only quantity the truncation needs, and it lets us replace the infinite rollout with a finite one that is provably equivalent for the purpose of certifying safety.

\subsubsection{Truncation time}
A finite $T_{\mathrm{fin}}$ is available in one of two ways, depending on how the
DS is specified.

\paragraph{Through exponential stability.}
When the DS is exponentially stable, a trajectory starting at $x_0$ satisfies
the envelope $\|\phi_\tau(x_0) - x^*\| \le \sqrt{c_2/c_1}\,\|x_0 - x^*\|\,
e^{-\alpha_L \tau / 2}$, where $\alpha_L$ is the convergence rate and $c_1,c_2$
bound the Lyapunov function. Requiring $\|\phi_\tau(x_0) - x^*\| \le \eta$ and
solving for $\tau$ gives the horizon in closed form,
\begin{align}
    T_{\mathrm{fin}}(x_0)
    = \frac{2}{\alpha_L}\,
      \ln\!\left( \frac{\sqrt{c_2/c_1}\,\|x_0 - x^*\|}{\eta} \right).
    \label{eq:tfin}
\end{align}

\paragraph{Through construction.}
Alternatively, some policies are built so that a finite horizon holds by
construction, without an explicit rate estimate. This is the case for the
diffeomorphic latent-space DS of Section~\ref{latent}, whose convergence
is inherited from a stable latent template;

In either case $T_{\mathrm{fin}}(x_0)$ is finite and computable, which is what
the following construction requires.

\subsubsection{Equivalence of the truncated value function}
By Assumption~3 the failure set is separated from the equilibrium, i.e.,
$h(x) > 0 ~\forall x$ within distance $\eta$ of $x^*$. Once the trajectory
enters the $\eta$-ball it is therefore safe for all remaining time, so the
worst-case safety value occurs before $T_{\mathrm{fin}}$. This motivates the
truncated value function
\begin{align}
    V(x) = \min_{\tau \in [0,\, T_{\mathrm{fin}}(x)]}
           h\big(\phi_\tau(x)\big),
    \label{eq:truncated-value}
\end{align}
which we now show certifies safety over the infinite horizon.

\begin{lemma}[Finite-horizon truncation]
\label{lem:truncation}
Let the DS satisfy Assumptions 1--3, and let $V$ be the truncated value
function \eqref{eq:truncated-value}. Then for every state $x$,
\begin{align}
    V(x) \ge 0
    \quad \Longleftrightarrow \quad
    h\big(\phi_\tau(x)\big) \ge 0 \;\; \text{for all } \tau \ge 0 .
\end{align}
That is, certifying $V(x) \ge 0$ over the finite horizon
$[0, T_{\mathrm{fin}}(x)]$ certifies safety of the entire infinite-horizon
trajectory.
\end{lemma}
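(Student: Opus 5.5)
The plan is to prove the two directions separately, splitting the time axis at $T_{\mathrm{fin}}(x)$.

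\emph{The direction ($\Leftarrow$) is immediate.} If $h(\phi_\tau(x)) \ge 0$ for every $\tau \ge 0$, then this holds in particular on the compact interval $[0, T_{\mathrm{fin}}(x)]$. Hence the minimum in \eqref{eq:truncated-value} is nonnegative. I will note in passing that the minimum is attained: $\tau \mapsto h(\phi_\tau(x))$ is continuous, because $h$ is continuous and the flow is continuous in time, and the interval is compact and finite by Assumption~2.

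\emph{For ($\Rightarrow$),} suppose $V(x) \ge 0$. I will split $[0,\infty)$ into $[0, T_{\mathrm{fin}}(x)]$ and $(T_{\mathrm{fin}}(x), \infty)$.
\begin{itemize}
\item On the first interval, $h(\phi_\tau(x)) \ge V(x) \ge 0$ directly from the definition of the minimum.
\item On the second interval, I invoke Assumption~2 in its strong form, as stated at the start of the truncation subsection: after $T_{\mathrm{fin}}(x)$ the trajectory \emph{enters and thereafter remains} in the $\eta$-ball about $x^*$. Then Assumption~3 gives $h(\phi_\tau(x)) > 0$ for all $\tau > T_{\mathrm{fin}}(x)$.
\end{itemize}
Combining the two intervals yields $h(\phi_\tau(x)) \ge 0$ for all $\tau \ge 0$.

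\emph{Checking the ``remains'' clause.} The main point needing care is that $\phi_\tau(x)$ stays in the $\eta$-ball for \emph{all} $\tau \ge T_{\mathrm{fin}}(x)$, not merely that it touches the ball at time $T_{\mathrm{fin}}$. In the exponentially stable case this follows from the monotone envelope $\|\phi_\tau(x)-x^*\| \le \sqrt{c_2/c_1}\,\|x-x^*\|\,e^{-\alpha_L\tau/2}$. The right-hand side is decreasing in $\tau$ and equals $\eta$ exactly at the $T_{\mathrm{fin}}$ of \eqref{eq:tfin}, so it is at most $\eta$ for all later times. When $\|x - x^*\| \le \eta\sqrt{c_1/c_2}$, \eqref{eq:tfin} is nonpositive. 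In that case I take $T_{\mathrm{fin}} = \max\{0,\cdot\}$, and the same argument applies from $\tau = 0$.

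For DS whose horizon comes from construction, I rely on Assumption~2 as stated, which already includes the ``thereafter remains'' property. If one wanted to derive it instead, the route would be to use a sublevel set of $V_L$ contained in the $\eta$-ball, which is forward invariant because $\dot V_L \le 0$. I would remark that this is the only place where the stability structure enters, and that it is what makes the finite-horizon certificate equivalent to the stationary one \eqref{eq:stationary-value}.
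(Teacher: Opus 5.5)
Your proof is correct and follows the paper's argument: split the time axis at $T_{\mathrm{fin}}(x)$, use the definition of the minimum on $[0,T_{\mathrm{fin}}(x)]$, and use Assumptions~2--3 on the tail. Your additional checks make explicit two details the paper leaves implicit: that the decreasing exponential envelope keeps the trajectory in the $\eta$-ball after $T_{\mathrm{fin}}$, and that $T_{\mathrm{fin}}$ should be clipped at zero for states already near $x^*$.
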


\begin{proof}
Forward direction ($\Leftarrow$). If the trajectory is safe for all
$\tau \ge 0$, then the minimum over the subinterval $[0, T_{\mathrm{fin}}]$ is
non-negative, so $V(x) \ge 0$.

Reverse direction ($\Rightarrow$). Partition the trajectory at
$T_{\mathrm{fin}}$. On $[0, T_{\mathrm{fin}}]$, $V(x) \ge 0$ gives
$h(\phi_\tau(x)) \ge 0$ by \eqref{eq:truncated-value}. For
$\tau > T_{\mathrm{fin}}$, the definition of the truncation time places the
trajectory inside the $\eta$-ball about $x^*$, where $h > 0$ by Assumption~3.
Hence $h(\phi_\tau(x)) \ge 0$ on both intervals, i.e.\ for all $\tau \ge 0$.
\end{proof}

Lemma~\ref{lem:truncation} reduces an infinite-horizon safety certificate to a
finite rollout whose length is $T_{\mathrm{fin}}$ which is given in closed form by
\eqref{eq:tfin} under exponential stability, and by construction for the DS
families that guarantee it otherwise. This is what makes $V$ computable, it can be derived analytically when $\phi_\tau$ is known, or learnable otherwise, and it is the property the remaining constructions rely on.

\subsection{Maximality of the Safe Set}

We now characterize the safe set $\{x : V(x) \ge 0\}$ induced by the value
function. For the controlled system, the superlevel set of the
reachability value function is the maximal control-invariant subset of the
safe region \cite{choi2021robustcontrolbarriervaluefunctions}. We show that
the autonomous DS inherits an analogous property,
which makes the resulting certificate the least conservative \nf{compared to the state-of-the art approaches (modulation and CBFs).} 

Recall the obstacle-free region $\mathcal{S}_h = \{x : h(x) \ge 0\}$, the
raw set of states not in collision. Let
$\mathcal{S}_V = \{x : V(x) \ge 0\}$ be the safe set induced by the value
function \eqref{eq:truncated-value}.

\begin{proposition}[Maximal forward-invariant set]
\label{prop:maximal}
Under Assumptions 1--3, $\mathcal{S}_V$ is the maximal forward-invariant
subset of $\mathcal{S}_h$ under the nominal DS flow $\dot{x} = f(x)$:
\begin{enumerate}
    \item[(i)] $\mathcal{S}_V \subseteq \mathcal{S}_h$, and $\mathcal{S}_V$
    is forward invariant under $f$;
    \item[(ii)] every forward-invariant subset of $\mathcal{S}_h$ is
    contained in $\mathcal{S}_V$.
\end{enumerate}
\end{proposition}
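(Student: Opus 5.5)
The plan is to work throughout with the infinite-horizon characterization from Lemma~\ref{lem:truncation}: $x \in \mathcal{S}_V$ if and only if $h(\phi_\tau(x)) \ge 0$ for all $\tau \ge 0$. This replaces the truncated value function \eqref{eq:truncated-value}, with its state-dependent horizon $T_{\mathrm{fin}}(x)$, by a horizon-free statement about the whole forward orbit. That matters because $T_{\mathrm{fin}}(\phi_s(x))$ and $T_{\mathrm{fin}}(x) - s$ need not coincide, so arguing directly with the truncated minimum would be awkward. With the lemma in hand, both parts become statements about orbits of the autonomous flow, and the only dynamical fact needed is the semigroup property $\phi_\tau(\phi_s(x)) = \phi_{\tau+s}(x)$. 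This property holds because $f$ is autonomous and, as is implicit in the paper, generates a well-defined forward flow.

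For part (i), the inclusion $\mathcal{S}_V \subseteq \mathcal{S}_h$ comes from taking $\tau = 0$ in the characterization. This gives $h(x) = h(\phi_0(x)) \ge 0$; equivalently, $\tau = 0$ lies in the minimization window, so $V(x) \le h(x)$.

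For forward invariance, fix $x \in \mathcal{S}_V$ and $s \ge 0$, and set $y = \phi_s(x)$. For every $\tau \ge 0$,
\begin{align}
h(\phi_\tau(y)) = h(\phi_{\tau+s}(x)) \ge 0,
\end{align}
since $\tau + s \ge 0$ and the entire orbit of $x$ is safe. Applying Lemma~\ref{lem:truncation} in the reverse direction to $y$ gives $V(y) \ge 0$, so $y \in \mathcal{S}_V$.

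For part (ii), let $A \subseteq \mathcal{S}_h$ be forward invariant under $f$, and take $x \in A$. Forward invariance gives $\phi_\tau(x) \in A \subseteq \mathcal{S}_h$ for all $\tau \ge 0$, so $h(\phi_\tau(x)) \ge 0$ for all $\tau \ge 0$. Lemma~\ref{lem:truncation} then yields $V(x) \ge 0$, that is, $x \in \mathcal{S}_V$. Hence $A \subseteq \mathcal{S}_V$. Together with (i), this shows that $\mathcal{S}_V$ is itself a forward-invariant subset of $\mathcal{S}_h$ containing all others, which is exactly maximality.

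The main obstacle is the forward-invariance step in (i), because of the mismatch between the state-dependent truncation horizons noted above. I expect to avoid it entirely by routing every step through the equivalence in Lemma~\ref{lem:truncation}, never comparing truncated minima directly. The remaining care point is that this equivalence depends on Assumption~2 holding at every state, including $y = \phi_s(x)$. Concretely, $T_{\mathrm{fin}}(y)$ must be a valid time after which the orbit of $y$ stays in the $\eta$-ball. I will state this explicitly. In the exponential-stability case it follows from \eqref{eq:tfin}, since the envelope bound applies from any initial condition.
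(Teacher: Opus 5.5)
Your proposal is correct and follows essentially the same route as the paper. Part (i) combines Lemma~\ref{lem:truncation} at $\tau=0$ with the tail identity $h(\phi_\tau(\phi_s(x)))=h(\phi_{s+\tau}(x))$, and part (ii) reads all-time safety off forward invariance of a subset of $\mathcal{S}_h$. Your care point about Assumption~2 holding at $y=\phi_s(x)$ is unnecessary: at $y$ you only use the trivial direction, since the minimum of a nonnegative function over $[0,T_{\mathrm{fin}}(y)]$ is nonnegative, so the nontrivial direction of the lemma is needed only at the original state $x$.
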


\begin{proof}
\emph{(i)} If $V(x) \ge 0$ then, by Lemma~\ref{lem:truncation},
$h(\phi_\tau(x)) \ge 0$ for all $\tau \ge 0$; in particular at $\tau = 0$,
$h(x) \ge 0$, so $x \in \mathcal{S}_h$ and $\mathcal{S}_V \subseteq
\mathcal{S}_h$. For invariance, fix $x \in \mathcal{S}_V$ and any $s \ge 0$.
The trajectory from $\phi_s(x)$ is the tail of the trajectory from $x$, so
$h(\phi_\tau(\phi_s(x))) = h(\phi_{s+\tau}(x)) \ge 0$ for all $\tau \ge 0$;
hence $V(\phi_s(x)) \ge 0$ and $\phi_s(x) \in \mathcal{S}_V$. Thus
$\mathcal{S}_V$ is forward invariant.

\emph{(ii)} Let $\mathcal{I} \subseteq \mathcal{S}_h$ be forward invariant
under $f$, and let $x \in \mathcal{I}$. By invariance $\phi_\tau(x) \in
\mathcal{I} \subseteq \mathcal{S}_h$ for all $\tau \ge 0$, so
$h(\phi_\tau(x)) \ge 0$ for all $\tau \ge 0$, giving $V(x) \ge 0$ and
$x \in \mathcal{S}_V$. Hence $\mathcal{I} \subseteq \mathcal{S}_V$.
\end{proof}

By (i), $\mathcal{S}_V$ is a
strict subset of the obstacle-free region, as the gap
$\mathcal{S}_h \setminus \mathcal{S}_V$ consists of states that are
instantaneously collision-free but lie on nominal trajectories that eventually enter the failure set. By (ii), no forward-invariant subset of
$\mathcal{S}_h$ extends beyond $\mathcal{S}_V$, so $\mathcal{S}_V$ is the
largest set that can be rendered safe under the nominal flow.

\begin{corollary}[Minimal conservativeness]
\label{cor:conservative}
Any safety certificate whose safe set is a forward-invariant subset of
$\mathcal{S}_h$ is contained in $\mathcal{S}_V$. Consequently, a filter that
enforces invariance of $\mathcal{S}_V$ intervenes on no state that a
correct certificate would admit as safe, thus, it is the least conservative certificate.
\end{corollary}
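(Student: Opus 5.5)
The corollary follows almost directly from Proposition~\ref{prop:maximal}(ii), so the plan is to make precise what a "safety certificate whose safe set is a forward-invariant subset of $\mathcal{S}_h$" means and then apply that result. Concretely, I would model a certificate as a function $B:\mathbb{R}^n\to\mathbb{R}$ (for instance a barrier function as in \eqref{eq:cbf}, or a modulation-induced safe region) with associated safe set $\mathcal{S}_B = \{x : B(x)\ge 0\}$. The hypothesis is that $\mathcal{S}_B \subseteq \mathcal{S}_h$ and that $\mathcal{S}_B$ is forward invariant under the nominal flow $\dot{x}=f(x)$. Call such a certificate \emph{correct}.

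The first step is containment. Take any $x\in\mathcal{S}_B$. By forward invariance, $\phi_\tau(x)\in\mathcal{S}_B\subseteq\mathcal{S}_h$ for all $\tau\ge 0$, so $h(\phi_\tau(x))\ge 0$ for all $\tau\ge0$. Lemma~\ref{lem:truncation} then gives $V(x)\ge 0$, that is, $x\in\mathcal{S}_V$. This is exactly Proposition~\ref{prop:maximal}(ii) with $\mathcal{I}=\mathcal{S}_B$, so I would simply invoke it.

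The second step is the "no spurious intervention" claim. A filter that enforces invariance of $\mathcal{S}_V$, for example a CBF-type constraint $\nabla V\cdot(f+u)\ge -\alpha V$, leaves $f$ unmodified whenever the nominal trajectory from $x$ stays in $\mathcal{S}_V$. Suppose some correct certificate admits $x$ as safe, meaning $x\in\mathcal{S}_B$. By the first step, $x\in\mathcal{S}_V$, and by Proposition~\ref{prop:maximal}(i) the nominal trajectory from $x$ remains in $\mathcal{S}_V$. So the nominal flow already satisfies the invariance requirement along it, and no intervention is needed there. Least conservativeness is then the set-inclusion statement $\mathcal{S}_B\subseteq\mathcal{S}_V$ holding for every correct $B$, together with the fact that $\mathcal{S}_V$ is itself a correct certificate by (i).

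The main obstacle is interpretive rather than technical. "Intervenes on no state" depends on the filter implementation. With a pointwise condition, a filter could still activate at a state in $\mathcal{S}_V$ where $V$ is nonsmooth or where the $-\alpha V$ margin binds. I would handle this by stating the claim at the level of the admissible sets: the set of states declared safe is maximal. I would add a remark that, for a minimally invasive filter on a differentiable $V$, the nominal input is feasible on the interior of $\mathcal{S}_V$ along nominal trajectories. I would not attempt a stronger pointwise statement.
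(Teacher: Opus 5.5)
Your containment step is exactly the paper's argument. The first sentence of the corollary is Proposition~\ref{prop:maximal}(ii) with $\mathcal{I}=\mathcal{S}_B$, and the paper offers nothing beyond that.

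The gap is in the clause ``intervenes on no state.'' For the QP \eqref{eq:qp-safety}, $u^\star(x)=0$ requires the pointwise inequality $\nabla V(x)^\top f(x)\ge-\alpha_V V(x)$. Forward invariance of $\mathcal{S}_V$ under $f$, which is your set-level argument, does not deliver this by itself. So your sentence ``the nominal flow already satisfies the invariance requirement along it, and no intervention is needed there'' does not follow, and you then retreat to a weaker set-level reading.

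The missing idea is monotonicity of $V$ along the nominal flow. The paper uses it in its Safety Filter subsection and cites it as consistent with the corollary. It follows from the tail identity you already use: for the stationary value \eqref{eq:stationary-value}, $V(\phi_s(x))=\min_{\tau\ge s}h(\phi_\tau(x))\ge V(x)$ for all $s\ge 0$. Hence, wherever $V$ is differentiable, $\nabla V^\top f\ge 0\ge-\alpha_V V$ on all of $\mathcal{S}_V$, boundary included, so $u=0$ is feasible and therefore the minimizer. The $-\alpha_V V$ margin thus cannot force an intervention on $\mathcal{S}_V$. The only real caveat is nonsmoothness of $V$ (e.g.\ on the tie set $\Sigma$), which the paper also leaves implicit.

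Your caution does have one legitimate target, and it is worth stating precisely rather than as a blanket hedge. Take the truncated value \eqref{eq:truncated-value} with a state-dependent horizon such as \eqref{eq:tfin}. The rollout window $[s,\,s+T_{\mathrm{fin}}(\phi_s(x))]$ can reach past $T_{\mathrm{fin}}(x)$, since the exponential envelope only gives $s+T_{\mathrm{fin}}(\phi_s(x))\le T_{\mathrm{fin}}(x)+\alpha_L^{-1}\ln(c_2/c_1)$. So monotonicity is guaranteed only where the truncated and stationary values coincide. That holds whenever $V(x)\le\min_{\|y-x^*\|\le\eta}h(y)$, in particular for states with small $V$ near $\partial\mathcal{S}_V$. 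This leaves the set-level statement untouched, because by Lemma~\ref{lem:truncation} both versions have the same zero-superlevel set. But the clean pointwise claim ``$u=0$ on $\mathcal{S}_V$'' is a statement about the stationary $V$.
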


Physically, a barrier on 
$h$ reacts to instantaneous proximity to the obstacle surface and may admit states that eventually lead to collision while influencing states that were never at risk. 

\subsection{The Safety Filter}
When the value function indicates an unsafe trajectory --- either $V(x) < 0$, or a perturbation pushes the state
toward $\partial\mathcal{S}_V$ --- we modify the DS through a virtual input
$u$, $\dot{x} = f(x) + u$, to keep the trajectory safe.

In this scenario, we use $V$ as a \emph{barrier function} for $f + u$. By
Proposition~\ref{prop:maximal}, $\{V \ge 0\}$ is the maximal invariant set
of the nominal flow, hence the least conservative superlevel set on which to
enforce a barrier condition. We obtain $u$ as the minimum-norm correction
maintaining $\dot{V} \ge -\alpha_V V$:
\begin{align}
    \min_{u} \;\; & \tfrac{1}{2}\|u\|^2 \nonumber\\
    \text{s.t.} \;\; & \nabla V(x)^\top (f(x) + u) + \alpha_V V(x) \ge 0,
    \label{eq:qp-safety}
\end{align}
with barrier gain $\alpha_V > 0$. 

This optimization problem also induces a convenient switching behavior between the filter being inactive and active. Along a nominal trajectory with $V(x) > 0$, the value function is non-decreasing. Thus, $V$ is constant until the trajectory reaches its closest approach point to the obstacle and increases once that point is passed and the trajectory recedes. Hence $\dot{V} \ge 0 > -\alpha_V V$, the constraint \eqref{eq:qp-safety}
holds for nominal dynamics, and $u = 0$. The filter is therefore off over the entire safe set under nominal DS flow, including trajectories that pass close to the obstacle, consistent with Corollary~\ref{cor:conservative}.

A nonzero correction arises only in two cases: (i) when a perturbation drives the state toward $\partial\mathcal{S}_V$, or (ii) when the state lies in the unsafe set $V < 0$. In the first, $u$ acts to keep $\dot{V} \ge -\alpha_V V$ and preserve invariance --- a guarantee of \emph{maintenance}. In the
second, the same condition drives $V$ upward toward the safe set in a recovery behavior.

Where stability is not guaranteed by the DS construction
\eqref{eqn_stability}, the analogous Lyapunov condition on $V_L$ is added as
a second constraint, relaxed by a slack variable $\delta$ so that safety takes
precedence:
\begin{align}
    \min_{u, \delta } \;\; & \tfrac{1}{2}\|u\|^2 + w\cdot \delta ^2 \nonumber\\
    \text{s.t.} \;\; & \nabla V(x)^\top (f(x) + u) + \alpha_V V(x) \ge 0 \nonumber,\\
     & \nabla V_L(x)^\top (f(x) + u) + \alpha_{V_L} V_L(x)\le \delta \nonumber,\\
    & \delta \ge0
    \label{eq:qp-safety1}
\end{align}

with barrier gain $\alpha_V$ retaining its function for the reachability value function $V$ and $\alpha_{V_L}$ acting as the barrier gain for the Lyapunov function $V_L$.  

\section{Obstacle-Avoidance for Concave Obstacles}
Modulation-based approaches \cite{DSbook,Huber2019-ur,onmanifold} are closed-form solutions tailored to reshape a nominal DS to avoid obstacles or constraints. However, they are known to suffer from saddle points (or undesirable equilibria) when avoiding the obstacles with both convex and concave geometries \cite{onmanifold}. This drawback is also found in geometric CBF collision avoidance strategies \cite{mcbf}; and is overall a common issue in geometric-based reactive obstacle avoidance techniques. In this section, we show that our proposed reachability-based safety filter \eqref{eq:qp-safety} driven by our value function $V(x)$ alleviates this issue altogether. 

\subsection{Modulation and CBF as DS Safety Filters}
Next, we summarize the two common safety filters used for DS motion planning: (i) modulation and (ii) geometric CBFs. 

\subsubsection{Modulation} For reference direction-based approach with $\Gamma(x):\mathbb{R}^n\rightarrow\mathbb{R}$ as the distance function, the modulated DS takes the form:
\begin{align}
    \dot x = M_r(x)\,f(x), \qquad M_r = E_r\,D_r\,E_r^{-1},
\end{align}
with the non-orthonormal basis $E_r(x)=[\,r(x),\,e(x)\,]$, reference direction
$r(x)=\tfrac{x-x^r}{\|x-x^r\|}$ ($x^r$ inside the obstacle), tangents $e(x)$,
and eigenvalues
\begin{align}
    \lambda_n = 1-\tfrac{1}{\Gamma(x)}, \qquad
    \lambda_t = 1+\tfrac{1}{\Gamma(x)},
\end{align}
which compress the normal and stretch the tangential component so the boundary
is never crossed. Expanded,
\begin{align}
    \dot x = n\lambda_n\langle n,f\rangle + e\lambda_t\langle e,f\rangle
             -\tfrac{2}{\Gamma}\,e\,
             \tfrac{\langle e,r\rangle}{\langle n,r\rangle}\langle n,f\rangle .
\end{align}

For head-on collisions, $f \parallel n$ and hence $\dot{x} = 0$ at the collision point on the boundary which acts as a saddle point in this modulated DS. While recent approaches have addressed this issue through contaction theory and differential geometry insights, their proofs are only valid for linear nominal DS motion policies \cite{onmanifold,Huber2019-ur}. As studied in \cite{koptev}, for arbitrary DS motion policies with arbitrary high-dimensional obstacle geometries (like joint space obstacle representations) one requires a planner or heuristic to avoid such saddle points. 

\subsubsection{Geometric CBF}  Geometric CBF approaches, on the other hand, filter the nominal DS through a barrier
$h(x)=\Gamma(x)-1$ ($h\ge 0$ safe), solving the min-intervention QP
\begin{align}
    u^\star = \arg\min_{u}\ \|u-f(x)\|^2
    \quad\text{s.t.}\quad \nabla h^\top u \ge -\alpha\,h,
\end{align}
whose closed-form safe velocity, when the constraint is active, is the
projection
\begin{align}
    \dot x = u^\star
    = f - \frac{\nabla h^\top f + \alpha h}{\|\nabla h\|^2}\,\nabla h,
\end{align}
removing only the inward normal component of $f$. On the boundary ($h=0$) a
head-on collision has $f\parallel\nabla h$, so the projection cancels $f$
entirely and $\dot x = 0$. Unlike modulation, the barrier never injects a
tangential escape velocity, so this stagnation point can often
act as a stable \emph{attractor} too as seen in Figure \ref{fig:saddle} and studied in depth in \cite{mcbf}.

\begin{figure*}[t]
  \centering
  \includegraphics[width=\textwidth]{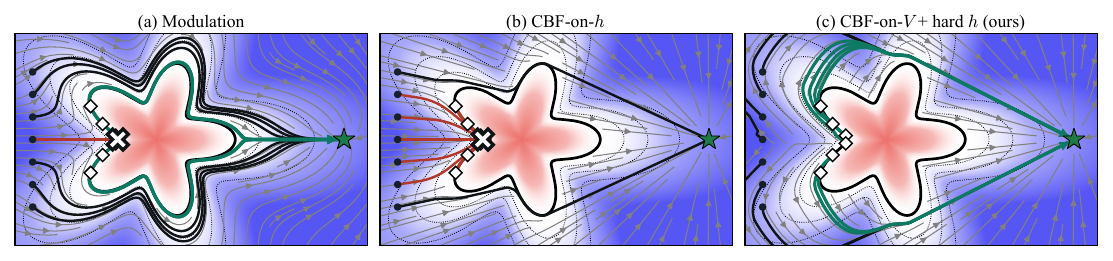}
  \caption{Head-on behavior on a star obstacle. Background shading is the safety
    margin $h=\Gamma-1$ (red: obstacle interior, blue: free space, opacity:
    distance to the boundary). Dots are far initial conditions, diamonds are on-obstacle
    initial conditions on the head-on arc, $\times$ marks the spurious
    equilibrium, and the star is the goal; black/teal trajectories reach the
    goal, red trajectories are trapped.
    \textbf{(a)} Reference modulation: the head-on line and one surface seed fall
    into a \emph{saddle}. \textbf{(b)} CBF-on-$h$: a spurious \emph{attractor}
    whose basin traps the far starts and the whole notch arc of surface seeds.
    \textbf{(c)} Stacked CBF-on-$V$ + hard-$h$ (ours): every start, on-obstacle
    or far, is steered around to the goal while keeping $h\ge 0$.}
  \label{fig:saddle}
\end{figure*}

\subsection{Saddle Point Removal w. Stacked CBF-on-$V$ + CBF-on-$h$}

Both failures highlighted in the previously come from myopic safety filtering; i.e., acting on where the trajectory is \emph{now}. To
remedy this, we stack two constraints in the filter,
\begin{align}
    u^\star & = \arg\min_{u}\ \|u-f(x)\|^2 \nonumber\\ \text{s.t.}&\quad
    \nabla h^\top u \ge -\alpha_h\,h\\ \nonumber
    & \quad \nabla V^\top u \ge -\alpha_V\,V
\end{align}
the first keeping the state out of the obstacle (as in modulation or CBF approaches), the second keeping its future
rollout safe. 
\paragraph{Tangential escape on the obstacle}
At a stall point on the boundary ($h=0$, $\nabla h^\top f<0$) the nominal
rollout enters the obstacle, so $V<0$. At $h=0$ the first constraint is
$\nabla h^\top u\ge 0$, which blocks inward motion, and the second constraint gives
$\nabla V^\top u\ge -\alpha_V V>0$. So $u^\star=0$ is infeasible. The only motion
that raises $V$ without going inward is along the boundary, so the state moves
off tangentially instead of stalling. 

\paragraph{Discontinuity.}
The gradient of $V$ depends on the minimizing time
$\tau^\star(x)=\arg\min_{\tau\in[0,T_{\mathrm{fin}}]} h(\phi_\tau(x))$,
\begin{align}
    \nabla V(x) = \big(\partial_x \phi_{\tau^\star}(x)\big)^\top
                  \nabla h\big(\phi_{\tau^\star}(x)\big).
\end{align}
Behind the obstacle, there might be the existence of a set $\Sigma$ where the rollout clears the
obstacle to either side, so $\tau^\star$ is not unique,
$\tau^\star\in\{\tau_L,\tau_R\}$, and $\nabla V_L\neq\nabla V_R$. We break the
tie by fixing a side, $\nabla V\coloneqq\nabla V_L$. The safe velocity is then
discontinuous across $\Sigma$,
\begin{align}
    \lim_{x\to\Sigma^{+}} u^\star \neq \lim_{x\to\Sigma^{-}} u^\star .
\end{align}
This does not affect convergence and trajectories do not stay on it. A comparison is drawn in Figure \ref{fig:saddle} which shows the behavior for a single star-shaped obstacle across these methods. 

\section{\nf{Applications to Different DS Constructions}}
We implement the certificate across five DS constructions. We first verify the method on analytical DS, where $V$ is
available in closed form and the result can be checked directly
(Section V-A). We then demonstrate the \emph{learned} construction on a
Neural ODE, the setting that scales to systems with no closed-form rollout
(Section V-B). We show \emph{generality} on diffeomorphic latent-space,
LPV-DS (Sections V-C and V-D) and finally \emph{bridge to hardware}
through an SE(3) formulation deployed on a manipulator (Section V-E).

\subsection{Analytical DS - Spiral and Linear DS}
\subsubsection{Linear DS}
Consider a linear DS $\dot{x} = A(x-x^*)$ with $A$ symmetric negative
definite, whose trajectories converge to the equilibrium $x^*$. The flow has a closed form equation, 
\begin{align}
    \phi_\tau(x_0) = x^* + e^{A\tau}(x_0 - x^*),
\end{align}
and we take the quadratic Lyapunov function $V_L(x) = (x - x^*)^\top P (x -
x^*)$. We take the safety function as the signed distance to a circular obstacle
of radius $r$ centered at $x_{\mathrm{obs}}$,
$h(x) = \|x - x_{\mathrm{obs}}\| - r$. The resultant value function
\eqref{eq:truncated-value} is
\begin{align}
    V(x_0) = \min_{\tau \in [0,\, T_{\mathrm{fin}}]}
             \Big( \|\phi_\tau(x_0) - x_{\mathrm{obs}}\| - r \Big),
\end{align}
where the horizon $T_{\mathrm{fin}}$ is given by \eqref{eq:tfin} with
$c_1, c_2$ the extremal eigenvalues of $P$ and $\alpha_L$ the contraction
rate of $A$. Evaluating the gradient at the minimizing time $\tau^*$,
\begin{align}
    \nabla_{x_0} V
    = \big(e^{A\tau^*}\big)^\top
      \frac{\phi_{\tau^*}(x_0) - x_{\mathrm{obs}}}
           {\|\phi_{\tau^*}(x_0) - x_{\mathrm{obs}}\|}.
\end{align}

\subsubsection{Spiral DS}
Taking
$A = \begin{bsmallmatrix} -\sigma & \omega \\ -\omega & -\sigma
\end{bsmallmatrix}$
yields a spiral DS converging to $x^*$, with closed-form trajectory
\begin{align}
    \phi_\tau(x_0) = x^* + e^{-\sigma\tau} R(-\omega\tau)(x_0 - x^*),
\end{align}
where $R(\cdot)$ is the planar rotation matrix. Using the Lyapunov function
$V_L(x) = (x - x^*)^\top(x - x^*)$ and the same signed-distance safety
function, the value function is
\begin{align}
    V(x_0) = \min_{\tau \in [0,\, T_{\mathrm{fin}}]}
             \Big( \|\phi_\tau(x_0) - x_{\mathrm{obs}}\| - r \Big),
\end{align}
with gradient at $\tau^*$
\begin{align}
    \nabla_{x_0} V
    = e^{-\sigma\tau^*} R(-\omega\tau^*)^\top
      \frac{\phi_{\tau^*}(x_0) - x_{\mathrm{obs}}}
           {\|\phi_{\tau^*}(x_0) - x_{\mathrm{obs}}\|}.
\end{align}

In both cases $V$ and $\nabla_{x_0} V$ are closed form, and we obtain the
safe velocity directly from the filter \eqref{eq:qp-safety}.
Figure ~\ref{fig:main} shows the resulting trajectories: the nominal flow
is corrected initially till it reaches a state where the nominal DS avoids the obstacle. Thereafter, there is no correction illustrating the minimum-intervention switching behavior of the filter. 

\begin{figure*}[t]
  \centering
  \subfloat[Analytical (spiral)]{%
    \includegraphics[width=0.19\textwidth]{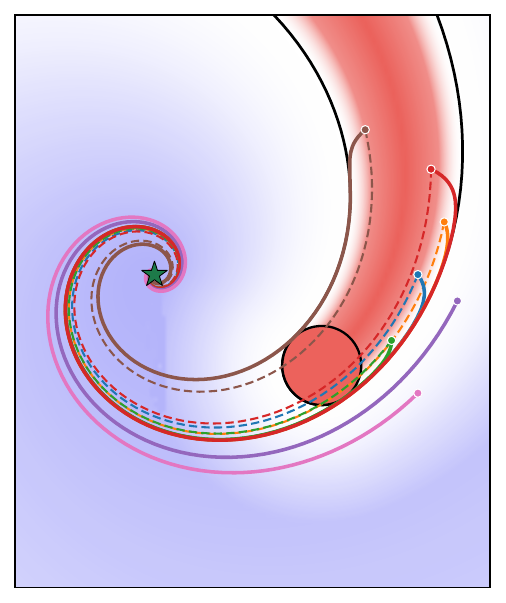}}\hfil
  \subfloat[Neural ODE]{%
    \includegraphics[width=0.19\textwidth]{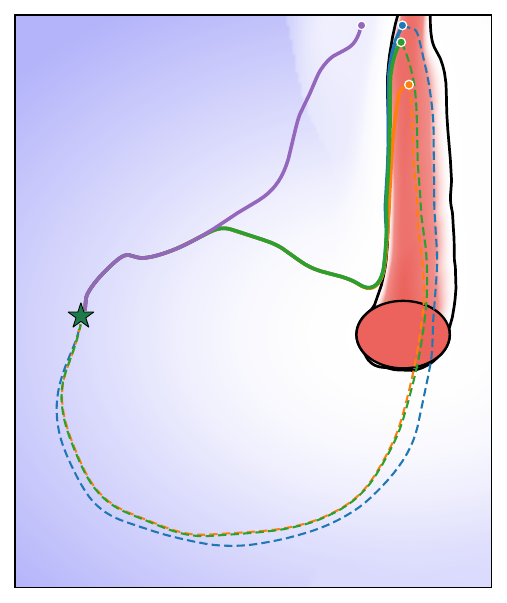}}\hfil
  \subfloat[LPV-DS]{%
    \includegraphics[width=0.19\textwidth]{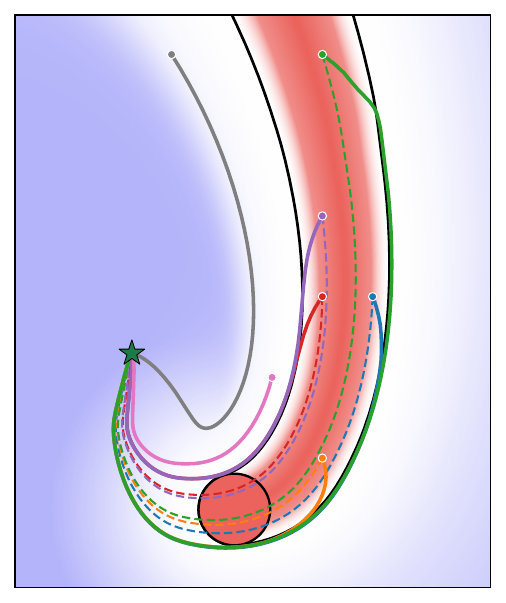}}\hfil
  \subfloat[Latent space]{%
    \includegraphics[width=0.19\textwidth]{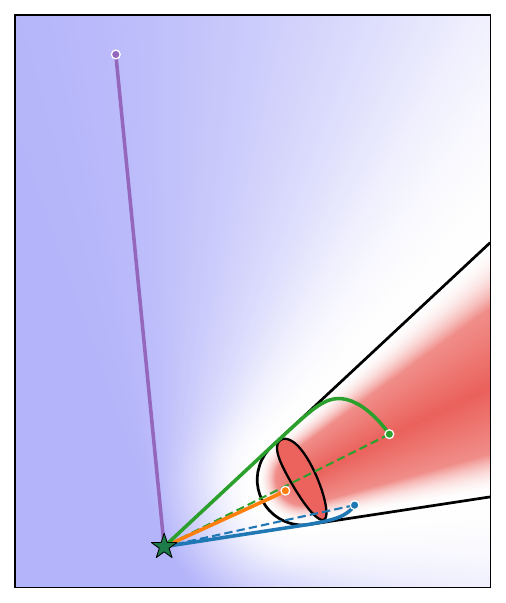}}\hfil
  \subfloat[Task space]{%
    \includegraphics[width=0.19\textwidth, height=0.2265\textwidth]{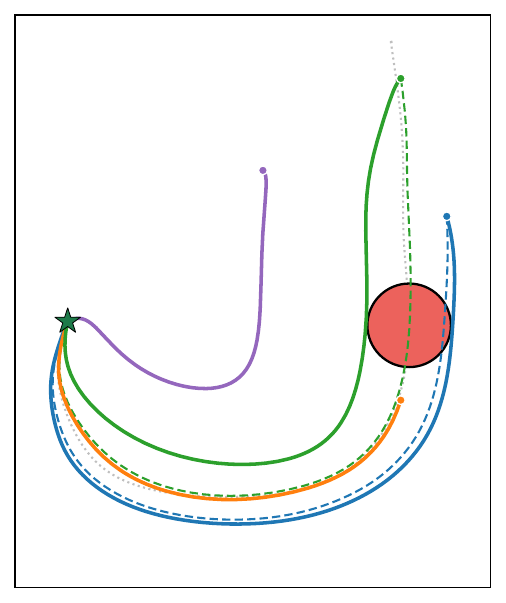}}
  \caption{Main results across the four DS constructions. In each panel the
    nominal (dashed) and filtered (solid) rollouts are overlaid on the
    reachability value landscape $V$, with the obstacle marked; only the
    computation of $V$ changes between panels, the filter
    (Eq.~\eqref{eq:qp-safety}) is identical. (a)~Spiral DS with closed-form $V$:
    the nominal arc enters the obstacle, the filtered rollout rides the
    $\{V=0\}$ boundary and converges. (b)~Neural-ODE DS with a learned value
    network on a LASA shape (JShape): four initial conditions shown, three with
    $V(x_0)<0$ and one with $V(x_0)\ge0$, each nominal/filtered pair in the same
    colour. (c)~LPV-DS with the certified exponential rate of
    $\alpha_L = \min_k \lambda_{\min}(M_k)\,/\,\lambda_{\max}(P)$ as noted in section \ref{sec:lpvds}. (d)-(e)~Diffeomorphic latent DS: the
    certificate is enforced in latent space (d) and the same four rollouts are
    shown pulled back through $\psi^{-1}$ into task space (e).}
  \label{fig:main}
\end{figure*}

\subsection{Neural ODE}
\label{sec:node}

We now turn to the case with a DS with no closed-form rollout, where we must learn the value function. We build on a Neural ODE learned jointly with
an ICNN (Input Convex Neural Network) Lyapunov function $V_L$ \cite{ pmlr-v202-xiao23d, manek2020learningstabledeepdynamics, 10610791}, which
certifies exponential stability, illustrating the capability of this method to extend to a learning framework.

The DS is rendered affine in its last-layer parameters
$\theta_{K-1,K}$. The original construction
enforces only the CLF (stability) condition; we add the value-function
condition, giving the projection
\begin{align}
    \min_{\theta_{K-1,K}} \;\;
        & \|\theta_{K-1,K} - \theta^{\dagger}_{K-1,K}\|^2 \nonumber\\
    \text{s.t.}\;\; & \nabla V_L^\top \dot{x}(\theta_{K-1,K}) + \alpha_L V_L \le 0,
    \label{eq:node-clf}\\
        & \nabla V^\top \dot{x}(\theta_{K-1,K}) + \alpha_V V \ge 0,
    \label{eq:node-cbf}
\end{align}
where $\theta^{\dagger}$ is the unconstrained last layer and $\dot{x}$ is
linear in $\theta_{K-1,K}$. Since there is no closed form $\phi_\tau$, it is not possible to find a closed form equation for $V$. We learn it with an
MLP $V_\theta$ regressed on rollout targets $\{x_0,\,
\min_{\tau\in[0,T_{\mathrm{fin}}]} h(\phi_\tau(x_0))\}$. Exponential stability makes this tractable.

Figure~\ref{fig:main} shows the learned $V$ and filtered
trajectories on Jshape of the LASA dataset \cite{lasa}.

\subsection{Latent-Space DS} \label{latent}

To show that the construction is not tied to any particular parameterization, we
use it on a diffeomorphic latent-space DS \cite{euclideanizing, stochasticDS}. Such methods learn a bijection
$\psi : \mathbb{R}^n \to \mathbb{R}^n$ mapping the observation space to a
latent space in which the dynamics are a simple linear stable flow. We apply the certificate in the latent space and map the result back via $\psi$.

In latent space, the DS is a unit-speed radial decay to the origin,
\begin{align}
    \dot{y} = -\frac{y}{\|y\|},
    \qquad
    \phi_\tau(y_0) = y_0 \max\!\big(1 - \tau/\|y_0\|,\, 0\big),
\end{align}
which reaches $y^\ast = 0$ exactly at $\tau = \|y_0\|$. The horizon is thus
exact, $T_{\mathrm{fin}} = \|y_0\|$, with no truncation needed.

The obstacle is mapped into latent space by sampling its observation-space
boundary $\partial\mathcal{O}$, pushing the samples through $\psi$, and
fitting an enclosing circle $C(y_c, r_y)$. Because $\psi$ is a
diffeomorphism, a trajectory clearing the latent obstacle clears the real
one. As the latent trajectory is a
straight line toward the origin, the value function is the closed-form
signed distance from that line to the latent obstacle,
\begin{align}
    V(y_0) = \|y_0 - \tau^\ast \hat{e} - y_c\| - r_y,
    \qquad \hat{e} = -\frac{y_0}{\|y_0\|},
\end{align}
where $\tau^\ast \hat{e}$ is the point of closest approach along the
trajectory. The filter \eqref{eq:qp-safety} is applied in latent space and
the corrected velocity mapped back through $\psi$. Figure ~\ref{fig:main}
shows the resulting obstacle avoidance in the observation space.

\subsection{LPV-DS}
\label{sec:lpvds}
We apply the certificate to an LPV-DS \cite{lpvds,DSbook} , a learned policy written as a state-dependent mixture of linear systems. Taking the equilibrium at the origin
($x^* = 0$), the dynamics read
\begin{align}
    \dot{x} = \sum_{k=1}^{K}\gamma_k(x)\, A_k x,
    \qquad \gamma_k \ge 0,\ \ \textstyle\sum_k \gamma_k = 1.
\end{align}
Stability is certified at learning time by a common quadratic Lyapunov function
$V_L(x) = x^\top P x$, $P \succ 0$, with the strict decrease constraints
$A_k^\top P + P A_k \preceq -Q_k \prec 0$. Writing
$M_k \coloneqq -(A_k^\top P + P A_k) \succ 0$ and using $\sum_k \gamma_k = 1$,
\begin{align}
    \dot{V}_L
    &= -\sum_k \gamma_k\, x^\top M_k\, x \nonumber\\
    &\le -\Big(\min_k \lambda_{\min}(M_k)\Big)\|x\|^2
       \;\le\; -\alpha_L V_L,
\end{align}
with $\alpha_L = \min_k \lambda_{\min}(M_k)\,/\,\lambda_{\max}(P)$. The LPV-DS
is thus exponentially stable at a rate read directly off the learned model, and
the horizon $T_{\mathrm{fin}}$ follows from \eqref{eq:tfin} with
$c_1 = \lambda_{\min}(P)$ and $c_2 = \lambda_{\max}(P)$

Unlike the analytical DS, the mixture admits no closed-form flow, so we obtain
$V$ by numerically integrating $\phi_\tau$ over $[0, T_{\mathrm{fin}}]$ and
taking the running minimum of $h$, as in the Neural ODE case. The filter
\eqref{eq:qp-safety} then yields the safe velocity.
Figure ~\ref{fig:main} shows the filtered trajectories on the learned LPV-DS.

\subsection{SE(3) Reachability}
\label{subsec:se3}
Finally, we extend the certificate to a DS on $\mathrm{SE}(3)$ and deploy it on
a Franka Research 3. We build on the stable Lie-group vector field of
\cite{urain2022learningstablevectorfields, lpvdsse3}, which learns a diffeomorphism $\Phi$ onto a latent space
with a simple stable flow, and add the reachability certificate on top.

A pose $H \in \mathrm{SE}(3)$ is expressed relative to the goal $H_o$ and
flattened to a tangent vector $\hat{x} = \mathrm{LogMap}(H_o^{-1} H)$ (zero at
the goal), which $\Phi$ maps to a latent coordinate $z = \Phi(\hat{x})$ with
normalized stable dynamics $\dot{z} = -\alpha z$. Stability in the latent space
is inherited on the manifold \cite{lee_introduction_2018} through $\Phi$.

Because the obstacle is physical, we evaluate safety on the end-effector
position in the task space. Rolling the latent decay forward, $z(\tau_k) = z_0\,e^{-\alpha\tau_k}$
for $\tau_k \in [0, T_{\mathrm{fin}}]$ with
$T_{\mathrm{fin}} = \tfrac{1}{\alpha}\log(\|z_0\|/\eta)$, each sample is
mapped back to a pose
$H(\tau_k) = H_o\,\mathrm{ExpMap}(\Phi^{-1}(z(\tau_k)))$ and its end-effector
position $p(\tau_k)$ read off. The value function is the worst clearance over
the rollout, smoothed by a softmin,
\begin{align}
    V(z_0) = -\frac{1}{\beta} \log \sum_{k=0}^{n_\tau - 1} e^{-\beta\, d_k},
    \qquad d_k = \|p(\tau_k) - c\| - r,
\end{align}
which under-approximates $\min_k d_k$ and is differentiable, giving
$\nabla_{z_0} V$ by autograd.

The filter \eqref{eq:qp-safety} is applied in the latent space and pulled back
to the observation tangent space through the Jacobian
$J = \partial\Phi/\partial\hat{x}$. We obtain the next commanded pose through $\mathrm{ExpMap}$.

 \newcommand{\stackcol}[2]{%
    \begin{minipage}[t]{0.19\textwidth}%
      \centering
      \includegraphics[width=\linewidth]{#1}%
      \par\nointerlineskip          
      \includegraphics[width=\linewidth]{#2}%
    \end{minipage}}

\section{Results}
We evaluate the certificate from \eqref{eq:truncated-value} and \eqref{eq:qp-safety} along two perspectives. First, we demonstrate that the reachability construction transfers unchanged across learned, analytical and latent dynamics. Second, we compare the behavior of our formulation against the standard local methods given by control barrier function and modulation, showing how they differ when they intervene. Finally, we validate the application on SE(3) LieFlow dynamics by applying it to a Franka manipulator and demonstrating its effectiveness on the CLFD dataset. 
\subsection{Generality across constructions}
\label{sec:res-general}

Figure~\ref{fig:main}(b) shows the Neural-ODE construction on a LASA \cite{lasa} shape. The
learned value network gives a $\{V=0\}$ black boundary that the filtered rollout
respects while the unfiltered demonstration-following flow would enter it. Figure \ref{fig:main}(c) shows the effect of the method on a DS learnt from J-shape using LPV-DS. 
Figure~\ref{fig:main}(d) shows the diffeomorphic latent construction. The
certificate is enforced in the latent space, where the DS is a unit-speed radial
decay and $V$ is the closed-form signed distance from the latent line to the
pushed-forward obstacle, and the safe velocity is mapped back through
$\psi^{-1}$ to the observation space where the perturbed trajectory clears the real obstacle and
converges. 

\begin{figure}[!t]
  \centering
  \includegraphics[width=\columnwidth]{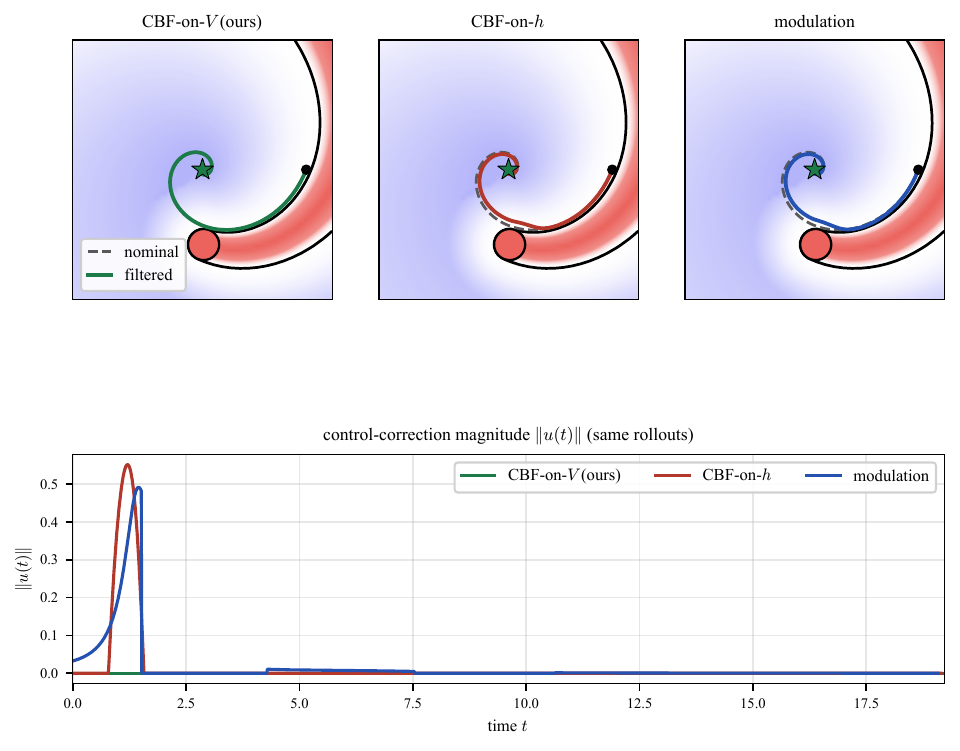}
  \caption{Three filters on the spiral DS from the same initial condition over
    the same obstacle acting on a nominally safe trajectory that grazes by the obstacle. CBF-on-$V$ (ours) coincides with the nominal, while CBF-on-$h$
    and modulation deflect a trajectory that was already safe. Bottom: the three
    correction magnitudes $\lVert u(t)\rVert$ for the rollout sequence.}
  \label{fig:cbf}
\end{figure}
\begin{table}[t]
\centering
\setlength{\tabcolsep}{4pt}         
\caption{Filter behaviour on the spiral DS. They
differ in intervention. CBF-on-$V$ never activates as nominal trajectory is already safe. Metrics: intervention rate
$=\#\{\|u\|>\epsilon\}/N$; total effort $=\int\|u\|\,dt$; peak
$=\max_t\|u\|$; path deviation $=\int\|x_{\mathrm{filt}}-x_{\mathrm{nom}}\|\,dt$;}
\label{tab:cbf}
\begin{tabular}{lccc}
\toprule
Metric & CBF-on-$V$ (ours) & CBF-on-$h$ & mod. \\
\midrule
Intervention rate
& \multirow{4}{*}{\shortstack{$\mathbf{0}$\\[2pt]\scriptsize(never\\\scriptsize activates)}}
& 0.041 & 0.278 \\
Total effort     &  & 0.288 & 0.300 \\
Peak correction  &  & 0.552 & 0.492 \\
Path deviation   &  & 0.692 & 0.365 \\
\midrule
\bottomrule
\end{tabular}
\end{table}

\subsection{Minimum intervention versus local safeguards}
\label{sec:res-cbf}

The comparison in this section uses the spiral DS, a single initial condition, and a
single obstacle, and runs three filters on the system
(Figure ~\ref{fig:cbf}, Table~\ref{tab:cbf}):
(i)~\emph{CBF-on-$V$} (ours), the barrier of Eq.~\eqref{eq:qp-safety} on the
reachability value;
(ii)~\emph{CBF-on-$h$}, the same barrier on the instantaneous margin to obstacle
$h(x)=\lVert x-x_{\mathrm{obs}}\rVert-r$; and
(iii)~\emph{modulation}~\cite{onmanifold}, the closed-form deflection
$\dot x=M(x)f(x)$ with reference point at the obstacle centre.
The obstacle is placed just off the spiral arc so the nominal trajectory
\emph{grazes} it but doesn't collide with it.

Intervention is the main comparison characteristic. Because $V$ has a look-ahead component and is non-negative everywhere, the
constraint of Eq.~\eqref{eq:qp-safety} is inactive at every step i.e., CBF-on-$V$
\emph{never activates}. CBF-on-$h$ reads only the local margin value and
therefore fires a brief perturbation as the trajectory passes the obstacle. Modulation deflects whenever the flow points
toward the obstacle and so engages from afar. The bottom panel of Figure~\ref{fig:cbf} shows
the three $\lVert u(t)\rVert$ traces on a shared axis. Note that ours is flat at zero, the
barrier on $h$ is an impulsive spike close to the obstacle, and modulation is a more sustained correction.

\begin{figure*}[t!]
  \centering
  \begin{minipage}[t]{0.45\textwidth}\centering
    \includegraphics[width=0.9\linewidth]{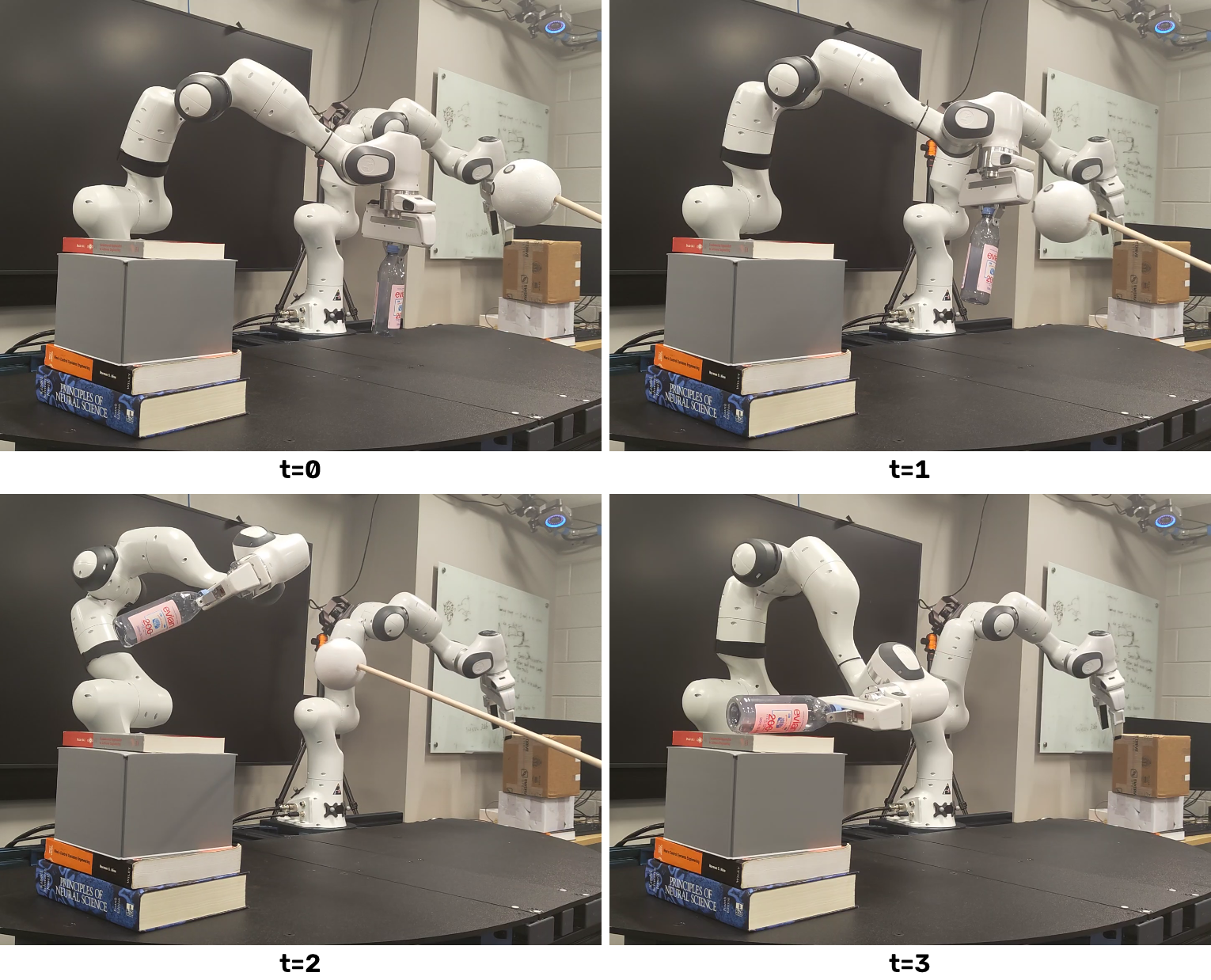}\\[2pt]
    \includegraphics[width=\linewidth]{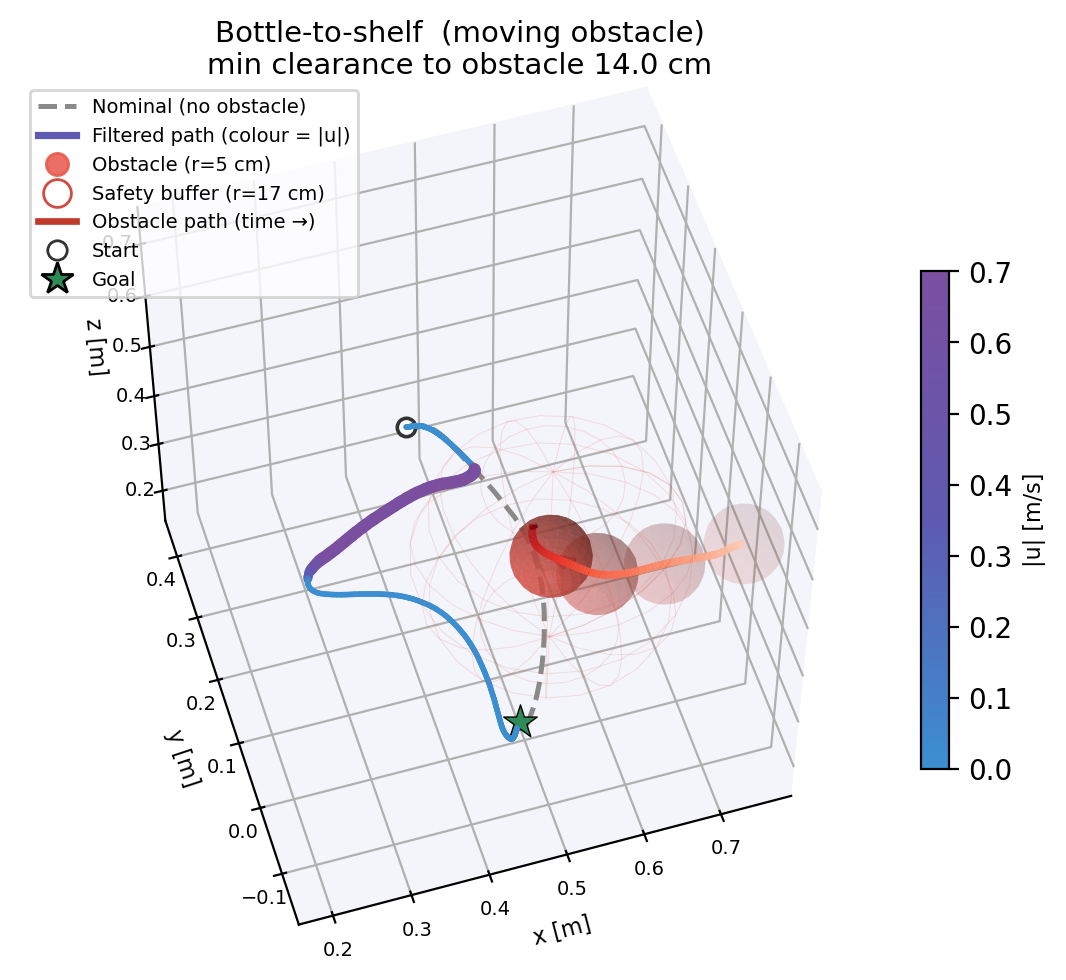}\\[3pt]
    {\small (a) Bottle-to-shelf}
  \end{minipage}\begin{minipage}[t]{0.45\textwidth}\centering
    \includegraphics[width=0.9\linewidth]{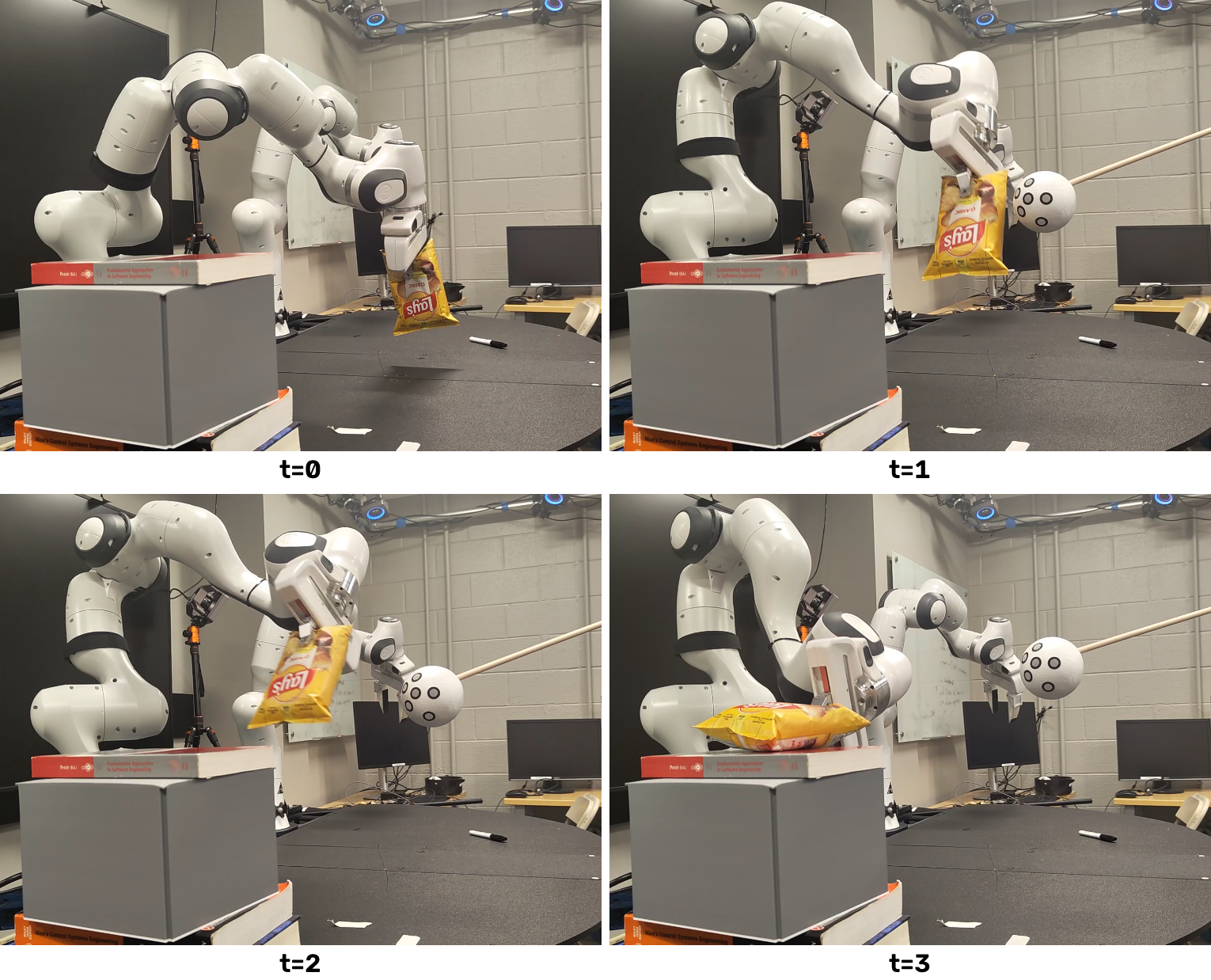}\\[2pt]
    \includegraphics[width=\linewidth]{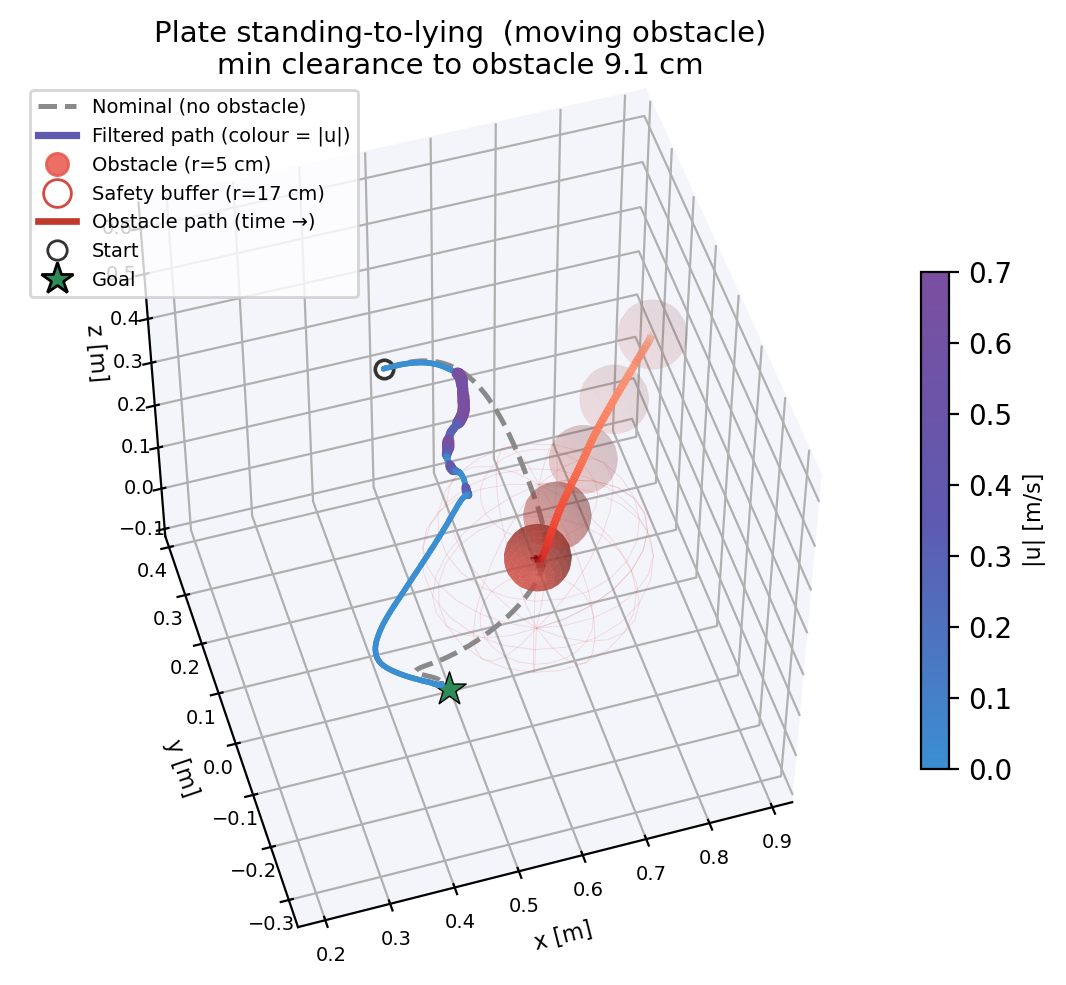}\\[3pt]
    {\small (b) Plate standing-to-lying}
  \end{minipage}
  \caption{Reactive avoidance of a hand-moved obstacle on the Franka for two
  SE(3) skills. Top: video frames at $t=0\text{--}3$. Bottom: nominal (grey
  dashed) and filtered end-effector paths, the latter coloured by the task-space
  correction $\lVert u\rVert$; the mocap-tracked obstacle is a time-graded red
  trail, solid at closest approach.}
  \label{fig:reactive}
\end{figure*}

\section{SE(3) LieFlow on the Franka Arm}
We deploy the SE(3) certificate of Section~\ref{subsec:se3} on a Franka
Research~3 to test whether the same reachability filter transfers, unchanged,
from simulation to a physical 7-DoF manipulator executing full-pose skills. We
use three skills from the CLFD RobotTasks benchmark~\cite{clfd}---\emph{pouring}, \emph{bottle-to-shelf}, and
\emph{plate-standing-to-lying}---each a demonstrated SE(3) trajectory in which
end-effector position and orientation are reproduced jointly. Every skill is
learned as a stable SE(3) LieFlow DS~\cite{urain2022learningstablevectorfields,lpvdsse3} trained only to imitate
the demonstrations; obstacle avoidance is added at deployment by the filter,
with no retraining of the policy.

\subsection{Deployment}
The filtered velocity $\dot{\hat{x}}$ tracked by a passive,
energy-bounded impedance controller at $100$~Hz \cite{7358081}.  The end-effector frame
velocity $\dot p_{ee}=v_s + w_s\times p_{ee}$ drives the linear channel and
$w_s$ the wrist, together reproducing the rigid-body motion of the learned
field. A spherical obstacle of physical radius $5$~cm, inflated by a safety
buffer to a radius $r_\text{buf}$, is tracked by motion capture and its centre
$c$ is streamed live, so the failure region is evaluated on the measured
end-effector position in task space. At every control tick we evaluate the
softmin value $V$ and its gradient $\nabla V$ on a GPU at the current pose and
obstacle centre and apply the safety projection~\eqref{eq:qp-safety} before
mapping the chart velocity back to a twist.

We report two evaluations that probe
complementary properties of the certificate: (i) its real-time response to a
\emph{moving} obstacle, and (ii) its intervention behaviour against a local baseline
under a \emph{fixed} obstacle.

\subsection{Anticipatory versus reactive intervention}
With the obstacle fixed, we contrast our reachability filter (CBF-on-$V$)
against a geometric distance CBF $h=\lVert p_{ee}-c\rVert-r_\text{buf}$
enforced by a min-norm QP on the velocity command, both acting on the same learned
policy. For each task
Table~\ref{tab:se3} reports the intervention lead, the active fraction,  the boundary distance (arc length travelled adjacent to the
obstacle surface), and the RMS command jerk. None of the runs collide. Because the reachability value looks
ahead, our filter begins correcting $0.9$--$2.1$~s before the closest
approach which is much earlier than the distance barrier,
which stays dormant until the buffer is reached and engages only $\sim\!0.1$~s
ahead. Correcting early, it skirts the obstacle over a shorter arc on every task and commands consistently lower jerk. This behavior is expected from the CBF-on-reachability approach.

In sum, the distance barrier reacts late at the buffer and
brakes hard, whereas the reachability value anticipates the learned flow and
corrects early and lightly.

\begin{table}[t]
\centering
\caption{SE(3) hardware: reachability filter (CBF-on-$V$, ours) vs.\ geometric
distance CBF (CBF-on-$h$) on the same learned policy per task; no collision in
any run. Arrows give the better direction; best per
task/metric in bold.}

\label{tab:se3}
\setlength{\tabcolsep}{4pt}
\footnotesize
\resizebox{\columnwidth}{!}{%
\begin{tabular}{lcccccc}
\toprule
& \multicolumn{2}{c}{Bottle-to-shelf} & \multicolumn{2}{c}{Pouring} & \multicolumn{2}{c}{Plate std.-to-lying} \\
\cmidrule(lr){2-3}\cmidrule(lr){4-5}\cmidrule(lr){6-7}
Metric & ours & CBF-$h$ & ours & CBF-$h$ & ours & CBF-$h$ \\
\midrule
Intervention lead (s)\,$\uparrow$          & \textbf{2.12}  & 0.11  & \textbf{1.11}  & 0.12  & \textbf{0.86}  & 0.12 \\
Correction-active fraction\,$\downarrow$   & \textbf{0.067} & 0.095 & \textbf{0.239} & 0.322 & 0.205 & \textbf{0.139} \\
Boundary distance (cm)\,$\downarrow$       & \textbf{13.6}  & 34.8  & \textbf{34.4}  & 42.4  & \textbf{20.9}  & 33.3 \\
Command jerk RMS (m/s$^3$)\,$\downarrow$   & \textbf{163}   & 460   & \textbf{281}   & 1104  & \textbf{85}    & 92 \\
\bottomrule
\end{tabular}
}
\end{table}

\subsection{Reactive avoidance under a moving obstacle}
In this task we use motion capture to track a hand-held sphere that a person
moves into the end-effector path during execution, treating it as a dynamic
obstacle: its centre $c$ is streamed live at every control tick, so the failure
region is non-stationary. The filter re-evaluates $V$ and $\nabla V$ at the
current pose and $c$ each tick and re-plans the avoidance online.

Figure~\ref{fig:reactive} shows the filtered end-effector path for two skills,
coloured by the task-space correction $\lVert u\rVert$, with the obstacle's
mocap trajectory drawn as a time-graded trail that is solid at closest approach.
The correction stays near zero over most of the path and rises only over a short
arc as the sphere enters, steering the arm around it; the arm keeps tracking the
demonstrated skill and reaches the goal pose in both position and orientation
without contacting the obstacle, even as the obstacle keeps moving.

Because $V$ scores the worst-case clearance along the predicted closed-loop rollout rather than the instantaneous distance to the obstacle, the filter anticipates the incoming sphere and corrects early. This look-ahead, minimum-intervention behaviour
is the practical advantage of the reachability certificate: it delivers
real-time, anticipatory safety that transfers unchanged to a physical SE(3) system and holds even when the failure set is non-stationary.

\section{Conclusion}

This paper explores the application of the concept of reachability in the context of dynamical systems yields valuable theoretical and behavioral results such as a description of the maximal invariance set and the early correction behavior. It also lends to an algorithm that runs on high-dimensional systems in the SE(3) manifold, verifying its practicality. The framework is general in that it can be applied to multiple dynamical systems settings with relative ease. For future work, the extension of this application to multiple objects is a non-trivial work that we plan to undertake.

\bibliographystyle{IEEEtran}
\bibliography{refs}
\end{document}